\numberwithin{equation}{section}
\newcommand*{\cldots}{\hspace{-2pt}  \mathinner{{\cdotp}{\cdotp}{\cdotp}}\hspace{-1pt}}
\newcommand{\BigO}[1]{\ensuremath{\operatorname{O}\hspace{-1pt}\left(#1\right)}}
\newtheorem{theorem}{Theorem}[section]
\newtheorem{lemma}[theorem]{Lemma}
\theoremstyle{definition}
\theoremstyle{remark}
\begin{document}

\title[Asymptotically Exact, Embarrassingly Parallel MCMC]
        {Asymptotically Exact, Embarrassingly Parallel MCMC}

\author[Willie Neiswanger]{Willie Neiswanger$^1$}
\address{}
\curraddr{}
\email{}
\thanks{}

\author[Chong Wang]{Chong Wang$^2$}
\address{}
\email{}
\thanks{}

\author[Eric Xing]{Eric Xing$^3$}
\address{}
\email{}

%\thanks{Last revised \today}

\thanks{$^{1}$\texttt{willie@cs.cmu.edu}, $^{2}$\texttt{chongw@cs.princeton.edu},
    $^{3}$\texttt{epxing@cs.cmu.edu}. Last revised \today.
    %\\ \mbox{}\hspace{4mm}
    %Last revised \today
}

\maketitle

\begin{abstract}
Communication costs, resulting from synchronization requirements during
learning, can greatly slow down many parallel machine learning algorithms.  In
this paper, we present a parallel Markov chain Monte Carlo (MCMC) algorithm in
which subsets of data are processed independently, with very little
communication.  First, we arbitrarily partition data onto multiple machines.
Then, on each machine, any classical MCMC method (e.g., Gibbs sampling) may be
used to draw samples from a posterior distribution given the data subset.
Finally, the samples from each machine are combined to form samples from the
full posterior.  This embarrassingly parallel algorithm allows each machine to
act independently on a subset of the data (without communication) until the
final combination stage.  We prove that our algorithm generates asymptotically
exact samples and empirically demonstrate its ability to parallelize burn-in
and sampling in several models.
\end{abstract}

\section{Introduction}
Markov chain Monte Carlo (MCMC) methods are popular tools for performing
approximate Bayesian inference via posterior sampling.  One major benefit of
these techniques is that they guarantee asymptotically exact recovery of the
posterior distribution as the number of posterior samples grows.  However, MCMC
methods may take a prohibitively long time, since for $N$ data points, most
methods must perform $\BigO{N}$ operations to draw a sample.  Furthermore, MCMC
methods might require a large number of ``burn-in'' steps before beginning to
generate representative samples. Further complicating matters is the issue
that, for many big data applications, it is necessary to store and process data
on multiple machines, and so MCMC must be adapted to run in these
data-distributed settings.

Researchers currently tackle these problems independently, in two primary ways.
To speed up sampling, multiple independent chains of MCMC can be run in parallel
\cite{wilkinson2006parallel,laskey2003population,murray2010distributed};
however, each chain is still run on the entire dataset, and there is no
speed-up of the burn-in process (as each chain must still complete the full
burn-in before generating samples). To run MCMC when data is partitioned among
multiple machines, each machine can perform computation that involves a subset
of the data and exchange information at each iteration to draw a sample
\cite{Langford:2009,Newman:2009,smola2010architecture}; however, this requires
a significant amount of communication between machines, which can greatly
increase computation time when machines wait for external information
\cite{Agarwal:2012,Ho:2013}.

We aim to develop a procedure to tackle both problems simultaneously, to allow
for quicker burn-in and sampling in settings where data are partitioned among
machines.
%that communicate very minimally.
To accomplish this, we propose the following: on each machine, run MCMC on only
a subset of the data (independently, without communication between machines),
and then combine the samples from each machine to algorithmically construct
samples from the full-data posterior distribution.
%\newpage
%
We'd like our procedure to satisfy the following four criteria:
%\begin{enumerate}[topsep=0pt,itemsep=0pt,parsep=1pt,itemindent=0pt,leftmargin=*]
\begin{enumerate}[]
    \item Each machine only has access to a portion of the data.
    \item Each machine performs MCMC independently, without communicating (i.e.
    the procedure is ``embarrassingly parallel'').
    \item Each machine can use any type of MCMC to generate samples.
    \item The combination procedure yields provably asymptotically exact
    samples from the full-data posterior.
\end{enumerate}

The third criterion allows existing MCMC algorithms or software packages to be
run directly on subsets of the data---the combination procedure then acts as a
post-processing step to transform the samples to the correct distribution.
Note that this procedure is particularly suitable for use in a MapReduce
\cite{dean2008mapreduce} framework.  Also note that, unlike current strategies,
this procedure does not involve multiple ``duplicate'' chains (as each chain
uses a different portion of the data and samples from a different posterior
distribution), nor does it involve parallelizing a single chain (as there are
multiple chains operating independently). We will show how this allows our
method to, in fact, parallelize and greatly reduce the time required for
burn-in.

In this paper we will (1) introduce and define the \emph{subposterior}
density---a modified posterior given a subset of the data---which will be used
heavily, (2) present methods for the embarrassingly parallel MCMC and
combination procedure, (3) prove theoretical guarantees about the samples
generated from our algorithm, (4) describe the current scope of the presented
method (i.e. where and when it can be applied), and (5) show empirical results
demonstrating that we can achieve speed-ups for burn-in and sampling while
meeting the above four criteria.

\section{Embarrassingly Parallel MCMC}

The basic idea behind our method is to partition a set of $N$ i.i.d. data
points $x^N = \{x_1, \cdots,x_N\}$ into $M$ subsets, sample from the
\emph{subposterior}---the posterior given a data subset with an underweighted
prior---in parallel, and then combine the resulting samples to form
samples from the full-data posterior $p(\theta | x^N)$, where
$\theta~\in~\mathbb{R}^d$ and $p(\theta | x^N) \propto p(\theta) p(x^N |
\theta) = p(\theta) \prod_{i=1}^N p(x_i | \theta)$.

More formally, given data $x^N$ partitioned into $M$ subsets 
$\{ x^{n_1},\ldots,x^{n_M} \}$, the procedure is:
%\begin{enumerate}[topsep=2pt,itemsep=0pt,parsep=0pt]

\begin{enumerate}[]
    \item For $m = 1,\ldots, M$ (in parallel):\\
          Sample from the subposterior $p_m$, where
            \begin{equation}
                p_m(\theta) \propto p(\theta)^{\frac{1}{M}}
                p(x^{n_m} | \theta). \label{eq:subposterior}
            \end{equation}
    \item Combine the subposterior samples to produce samples from an estimate
        of the subposterior density product $p_1 \cldots p_M$, which is
        proportional to the full-data posterior, i.e. $p_1 \cldots p_M(\theta)
        \propto p(\theta|x^N)$. 
\end{enumerate}

We want to emphasize that we do not need to iterate over these steps and the
combination stage (step 3) is the only step that requires communication between
machines. Also note that sampling from each subposterior (step~2) can typically
be done in the same way as one would sample from the full-data posterior. For
example, when using the Metropolis-Hastings algorithm, one would compute the
likelihood ratio as $\frac{p(\theta^*)^{\frac{1}{M}}p(x^{n_m}|\theta^*)}
{p(\theta)^{\frac{1}{M}}p(x^{n_m}|\theta)}$ instead of
$\frac{p(\theta^*)p(x^{N}|\theta^*)}{p(\theta)p(x^{N}|\theta)}$, where
$\theta^*$ is the proposed move.
In the next section, we show how the combination stage (step 3) is carried out
to generate samples from the full-data posterior using the subposterior
samples.

\section{Combining Subposterior Samples}
%We now describe how to use the subposterior samples to generate samples from
%the full data posterior $p(\theta|x^N)$.
Our general idea is to combine the subposterior samples in such a way that we
are implicitly sampling from an estimate of the subposterior density product
function $\widehat{p_1 \cldots p_M}(\theta)$. If our density product estimator
is consistent, then we can show that we are drawing asymptotically exact
samples from the full posterior. 
%Further, we can explicitly analyze how quickly the distribution from which we
%are drawing samples is converging to the true posterior by studying the
%estimator error rate.
Further, by studying the estimator error rate, we can explicitly analyze how
quickly the distribution from which we are drawing samples is converging to the
true posterior (and thus compare different combination algorithms).

In the following three sections we present procedures that yield samples from
different estimates of the density product. Our first example is based on a
simple parametric estimator motivated by the Bernstein-von Mises
theorem~\cite{le1986asymptotic}; this procedure generates approximate
(asymptotically biased) samples from the full posterior. Our second
example is based on a nonparametric estimator, and produces asymptotically
exact samples from the full posterior. Our third example is based on a
semiparametric estimator, which combines beneficial aspects from the previous
two estimators while also generating asymptotically exact samples.

\subsection{Approximate posterior sampling with a parametric
estimator}
The first method for forming samples from the full posterior given subposterior
samples involves using an approximation based on the Bernstein-von Mises
(Bayesian central limit) theorem, an important result in Bayesian asymptotic
theory. Assuming that a unique, true data-generating model exists and is
denoted $\theta_0$, this theorem states that the posterior tends to a normal
distribution concentrated around $\theta_0$ as the number of observations
grows. In particular, under suitable regularity conditions, the posterior
$P(\theta|x^N)$ is well approximated by $\mathcal{N}_d(\theta_0,F_N^{-1})$ (where
$F_N$ is the fisher information of the data) when $N$ is large
\cite{le1986asymptotic}. Since we aim to perform posterior sampling when the
number of observations is large, a normal parametric form often serves as a
good posterior approximation. A similar approximation was used in
\cite{ahn2012bayesian} in order to facilitate fast, approximately correct
sampling. We therefore estimate each subposterior density with
$\widehat{p}_m(\theta) = \mathcal{N}_d(\theta | \widehat{\mu}_m,
\widehat{\Sigma}_m)$ where $\widehat{\mu}_m$ and $\widehat{\Sigma}_m$ are the
sample mean and covariance, respectively, of the subposterior samples.  The
product of the $M$ subposterior densities will be proportional to a Gaussian
pdf, and our estimate of the density product function $p_1 \cldots p_M (\theta)
\propto p(\theta | x^N)$ is
\begin{align*}
    \widehat{p_1 \cldots p_M} (\theta) = \widehat{p}_1 \cldots \widehat{p}_M
    (\theta) \propto \mathcal{N}_d \left( \theta | \widehat{\mu}_M,
    \widehat{\Sigma}_M \right),
\end{align*}
where the parameters of this distribution are 
\begin{align}
    &\widehat{\Sigma}_M = \left(\sum_{m=1}^M
    \widehat{\Sigma}_m^{-1} \right)^{-1}
    \label{parametricparams}\\
    &\widehat{\mu}_M = \widehat{\Sigma}_M \left( \sum_{m=1}^M
        \widehat{\Sigma}_m^{-1} \widehat{\mu}_m \right).
        \label{parametricparams2}
\end{align}
These parameters can be computed quickly and, if desired, online (as new
subposterior samples arrive).
%Additionally, efficient methods can be used to draw samples from this
%parametric density product estimate.

\subsection{Asymptotically exact posterior sampling with
nonparametric density product estimation}
\label{sec:nonparametricCombine}
In the previous method we made a parametric assumption based on the
Bernstein-von Mises theorem, which allows us to generate approximate samples
from the full posterior. Although this parametric estimate has quick
convergence, it generates asymptotically biased samples, especially in cases
where the posterior is particularly non-Gaussian. In this section, we develop a
procedure that implicitly samples from the product of nonparametric density
estimates, which allows us to produce asymptotically exact samples from the
full posterior. By constructing a consistent density product estimator from
which we can generate samples, we ensure that the distribution from which we
are sampling converges to the full posterior.
%We first define our nonparametric density product estimator, describe how we
%sample from an estimate, and then analyze the estimator in
%Section~\ref{sec:theory}.

Given $T$ samples\footnote{For ease of description, we assume each machine
generates the same number of samples, $T$. In practice, they do not have to be
the same.}
$\{\theta_{t_m}^m\}_{t_m=1}^{T}$ from a subposterior $p_m$, we can write
the kernel density estimator $\widehat{p}_m(\theta)$ as,
\begin{align*}
    \widehat{p}_m(\theta) 
    &= \frac{1}{T} \sum_{t_m=1}^{T} \frac{1}{h^d} 
    K\left( \frac{\|\theta - \theta_{t_m}^m\|}{h} \right) \nonumber \\
    &= \frac{1}{T} \sum_{t_m=1}^{T} \mathcal{N}_d(\theta|\theta_{t_m}^m,h^2I_d),
\end{align*}
where we have used a Gaussian kernel with bandwidth parameter
$h$. After we have obtained the kernel density estimator
$\widehat{p}_m(\theta)$ for $M$ subposteriors, we define our
nonparametric density product estimator for the full posterior as
\begin{align}
    \widehat{p_1 \cldots p_M}(\theta)
    &= \widehat{p}_1 \cldots \widehat{p}_M (\theta) \nonumber \\
    %&= \frac{1}{T^M} \sum_{t_1=1}^{T} \cldots \sum_{t_M=1}^{T}
    %\prod_{m=1}^M \mathcal{N}_d(\theta|\theta_{t_m}^m,h^2I_d) \nonumber \\
    &= \frac{1}{T^M} \prod_{m=1}^M \sum_{t_m=1}^{T} \mathcal{N}_d(\theta|\theta_{t_m}^m,h^2I_d) \nonumber \\
    &\propto \sum_{t_1=1}^{T} \cldots \sum_{t_M=1}^{T}
    w_{t\cdot}\hspace{2pt}\mathcal{N}_d\left( \theta \Big| \bar{\theta}_{t\cdot}, \frac{h^2}{M} I_d \right).
\end{align}
This estimate is the probability density function (pdf) of a mixture of $T^M$
Gaussians with {\it unnormalized} mixture weights $w_{t\cdot}$. Here, we use
$t\cdot = \{t_1,\ldots,t_M\}$ to denote the set of indices for the $M$ samples
$\{\theta_{t_1}^1,\ldots,\theta_{t_M}^M\}$ (each from a separate machine)
associated with a given mixture component, and we define
\begin{align}
    \bar{\theta}_{t\cdot} &= \frac{1}{M}\sum_{m=1}^M \theta_{t_m}^m \label{thetaBar}\\
    w_{t\cdot} &= \prod_{m=1}^M \mathcal{N}_d\left(\theta_{t_m}^m|
        \bar{\theta}_{t\cdot}, h^2I_d \right). \label{wWeight}
\end{align}
Although there are $T^M$ possible mixture components, we can efficiently
generate samples from this mixture by first sampling a mixture component (based
on its unnormalized component weight $w_{t\cdot}$) and then sampling from this
(Gaussian) component.  In order to sample mixture components, we use an
independent Metropolis within Gibbs (IMG) sampler. This is a form of MCMC,
where at each step in the Markov chain, a single dimension of the current state
is proposed (i.e. sampled) independently of its current value (while keeping
the other dimensions fixed) and then is accepted or rejected.  In our case, at
each step, a new mixture component is proposed by redrawing one of the $M$
current sample indices $t_m \in t\cdot$ associated with the component uniformly
and then accepting or rejecting the resulting proposed component based on its
mixture weight. We give the IMG algorithm for combining subposterior samples
%that combines subposterior samples to generate samples from the nonparametric
%density product estimate
in Algorithm~\ref{alg:nonIMG}.\footnote{Again for simplicity, we assume that we
generate $T$ samples to represent the full posterior, where $T$ is the number
of subposterior samples from each machine.}

In certain situations, Algorithm~\ref{alg:nonIMG} may have a low acceptance
rate and therefore may mix slowly. One way to remedy this is to perform the IMG
combination algorithm multiple times, by first applying it to groups of
$\tilde{M} < M$ subposteriors and then applying the algorithm again to the
output samples from each initial application. For example, one could begin by
applying the algorithm to all $\frac{M}{2}$ pairs (leaving one subposterior
alone if $M$ is odd), then repeating this process---forming pairs and applying
the combination algorithm to pairs only---until there is only one set of
samples remaining, which are samples from the density product estimate.

\begin{algorithm}[h]
    \caption{Asymptotically Exact Sampling via Nonparametric Density Product Estimation}
    \label{alg:nonIMG}
    \begin{algorithmic}[1]
        \REQUIRE Subposterior samples:
        $\{\theta_{t_1}^1\}_{t_1=1}^T\sim p_1(\theta),$ $\ldots,$ 
        $\{\theta_{t_M}^M\}_{t_M=1}^T\sim p_M(\theta)$\vspace{1pt}
        \ENSURE Posterior samples (asymptotically, as
        $T \rightarrow \infty$): $\{\theta_i\}_{i=1}^T\sim p_1 \cldots p_M (\theta)$
                $\propto p(\theta|x^N)$
        \STATE Draw $t\cdot$ $=$ $\{t_1,\ldots,t_M\} \stackrel{\text{iid}}{\sim} \text{Unif}(\{1,\ldots,T\})$
        \FOR{$i=1$ {\bfseries to} $T$}
            \STATE Set $h \leftarrow i^{-1/(4+d)}$
            \FOR{$m=1$ {\bfseries to} $M$}
                \STATE Set $c\cdot \leftarrow t\cdot$
                \STATE Draw $c_m \sim \text{Unif}(\{1,\ldots,T\})$
                \STATE Draw $u \sim \text{Unif}([0,1])$
                \IF{$u < w_{c\cdot}/w_{t\cdot}$} \vspace{1pt}
                    %\STATE Draw $\theta_t \sim \mathcal{N}_d(\bar{\theta}_{t\cdot},\frac{h}{M}I_d)$.
                    \STATE Set $t\cdot \leftarrow c\cdot$
                \ENDIF
            \ENDFOR
            \STATE Draw $\theta_i \sim \mathcal{N}_d(\bar{\theta}_{t\cdot},\frac{h^2}{M}I_d)$
        \ENDFOR
    \end{algorithmic}
\end{algorithm}
%%%%%%%%%%%%%%%%%%%%%%%%%%%%%%%%%%%%%%%%%
%%%%%%%%%%%%%%%%%%%%%%%%%%%%%%%%%%%%%%%%%
% Old algorithm below (wrong)
%\begin{algorithm}[h]
    %\caption{Asymptotically Exact Sampling via Nonparametric Density Product Estimation}
    %\label{alg:nonIMG}
    %\begin{algorithmic}[1]
        %\REQUIRE Subposterior samples, \\
        %$\{\theta_{t_1}^1\}_{t_1=1}^T\sim p_1(\theta),$ $\ldots,$ 
        %$\{\theta_{t_M}^M\}_{t_M=1}^T\sim p_M(\theta)$.\vspace{1pt}
        %\ENSURE Posterior samples (asymptotically, as $T
        %\rightarrow \infty$), \\ $\{\theta_i\}_{i=1}^T\sim p_1 \cldots p_M (\theta)$
                %$\propto p(\theta|x^N)$.
        %\STATE Set $h \leftarrow 1$.
        %\STATE Draw $t\cdot$ $=$ $\{t_1,\ldots,t_M\}
        %\stackrel{\text{iid}}{\sim} \text{Unif}(\{1,\ldots,T\})$.
        %\STATE Set $c\cdot \leftarrow t\cdot$.
        %\STATE Draw $\theta_1 \sim
        %\mathcal{N}(\bar{\theta}_{t\cdot},\frac{h}{M}I_d)$.
        %\FOR{$i=2$ {\bfseries to} $T$}
            %\FOR{$m=1$ {\bfseries to} $M$}
                %\STATE Set $t\cdot \leftarrow c\cdot$.
                %\STATE Draw $t_m \sim
                %\text{Unif}(\{1,\ldots,T\})$.
                %\STATE Set $h \leftarrow i^{-1/(4+d)}$.
                %%\STATE Draw $u \sim \mathcal{U}([0,1])$
                %\STATE Draw $u \sim \text{Unif}([0,1])$.
                %%\IF{$u < \frac{w_{t\cdot}}{w_{c\cdot}}$} \vspace{1pt}
                %\IF{$u < w_{t\cdot}/w_{c\cdot}$} \vspace{1pt}
                    %\STATE Draw $\theta_t \sim
                    %\mathcal{N}(\bar{\theta}_{t\cdot},\frac{h}{M}I_d)$.
                    %\STATE Set $c\cdot \leftarrow t\cdot$.
                %\ELSE
                    %\STATE Draw $\theta_t \sim
                    %%\mathcal{N}(\mu_{c\cdot},\Sigma_{c\cdot})$.
                    %\mathcal{N}(\bar{\theta}_{c\cdot},\frac{h}{M}I_d)$.
                %\ENDIF
            %\ENDFOR
        %\ENDFOR
    %\end{algorithmic}
%\end{algorithm}
%%%%%%%%%%%%%%%%%%%%%%%%%%%%%%%%%%%%%%%%%
%%%%%%%%%%%%%%%%%%%%%%%%%%%%%%%%%%%%%%%%%

\subsection{Asymptotically exact posterior sampling with
semiparametric density product estimation}
Our first example made use of a parametric estimator, which has
quick convergence, but may be asymptotically biased, while our
second example made use of a nonparametric estimator, which is
asymptotically exact, but may converge slowly when the number of
dimensions is large.  In this example, we implicitly sample from
a semiparametric density product estimate, which allows us to
leverage the fact that the full posterior has a near-Gaussian
form when the number of observations is large, while still
providing an asymptotically unbiased estimate of the posterior
density, as the number of subposterior samples $T
\rightarrow \infty$.

We make use of a semiparametric density estimator for $p_m$ that consists of
the product of a parametric estimator $\widehat{f}_m(\theta)$ $(\text{in our
case } \mathcal{N}_d(\theta|\widehat{\mu}_m,\widehat{\Sigma}_m)\text{ as above})$
and a nonparametric estimator $\widehat{r}(\theta)$ of the correction function
$r(\theta)$ $=$ $p_m(\theta) / \widehat{f}_m(\theta)$
\cite{hjort1995nonparametric}. This estimator gives a near-Gaussian estimate
when the number of samples is small, and converges to the true density as the
number of samples grows. Given $T$ samples $\{\theta_{t_m}^m\}_{t_m=1}^{T}$ from a
subposterior $p_m$, we can write the estimator~as 
\begin{align*}
    \widehat{p}_m(\theta) 
    &= \widehat{f}_m(\theta) 
        \hspace{1pt} \widehat{r}(\theta) \\
    &= \frac{1}{T} \sum_{t_m=1}^{T} \frac{1}{h^d} 
    K\left( \frac{\|\theta - \theta_{t_m}^m\|}{h} \right)
    \frac{\widehat{f}_m(\theta)}{\widehat{f}_m(\theta_{t_m}^m)} \nonumber \\
    &= \frac{1}{T} \sum_{t_m=1}^{T} 
    \frac{\mathcal{N}_d(\theta|\theta_{t_m}^m,h^2I_d)
            \mathcal{N}_d(\theta|\widehat{\mu}_m,\widehat{\Sigma}_m)}
            {\mathcal{N}_d(\theta_{t_m}^m|\widehat{\mu}_m,\widehat{\Sigma}_m)},
\end{align*}
where we have used a Gaussian kernel with bandwidth parameter $h$ for the
nonparametric component of this estimator. Therefore,
%given $T$ samples $\{\theta_{t_m}^m\}_{t_m=1}^{T}$ from each subposterior $p_m$
%($m=1,\ldots,M$),
we define our semiparametric density product estimator to be
\begin{align*}
    \widehat{p_1 \cldots p_M}(\theta) 
    &= \widehat{p}_1 \cldots \widehat{p}_M (\theta) \nonumber \\
    %&= \frac{1}{T^M} \sum_{t_1=1}^{T} \cldots \sum_{t_M=1}^{T}
        %\prod_{m=1}^M \frac{ \mathcal{N}_d(\theta|\theta_{t_m}^m,hI_d)
        %\mathcal{N}_d(\theta|\widehat{\mu}_m,\widehat{\Sigma}_m)}
        %{h^d \mathcal{N}_d(\theta_{t_m}^m|\widehat{\mu}_m,\widehat{\Sigma}_m)} \nonumber \\
    &= \frac{1}{T^M} \prod_{m=1}^M \sum_{t_m=1}^{T}
        \frac{ \mathcal{N}_d(\theta|\theta_{t_m}^m,hI_d)
        \mathcal{N}_d(\theta|\widehat{\mu}_m,\widehat{\Sigma}_m)}
        {h^d \mathcal{N}_d(\theta_{t_m}^m|\widehat{\mu}_m,\widehat{\Sigma}_m)} \nonumber \\
    %&\propto \sum_{t_1=1}^{T} \cldots \sum_{t_M=1}^{T}
        %\frac{w_{t\cdot} \hspace{1mm} \mathcal{N}_d\left( \bar{\theta}_{t\cdot} | 
        %\widehat{\mu}_M, \widehat{\Sigma}_M + \frac{h}{M}I_d \right)}
        %{\prod_{m=1}^M \mathcal{N}_d(\theta_{t_m}^m|\widehat{\mu}_m,\widehat{\Sigma}_m)}
        %\mathcal{N}_d\left( \theta | \mu_{t\cdot}, \Sigma_{t\cdot} \right) \\
    &\propto \sum_{t_1=1}^{T} \cldots \sum_{t_M=1}^{T}
        W_{t\cdot}\hspace{2pt}\mathcal{N}_d\left( \theta | \mu_{t\cdot}, \Sigma_{t\cdot} \right).
\end{align*}
This estimate is proportional to the pdf of a mixture of $T^M$ Gaussians with
unnormalized mixture weights,
\begin{align*}
    W_{t\cdot} &= 
        \frac{w_{t\cdot} \hspace{1mm} \mathcal{N}_d\left( \bar{\theta}_{t\cdot}
        | \widehat{\mu}_M, \widehat{\Sigma}_M + \frac{h}{M}I_d \right)}
        {\prod_{m=1}^M
        \mathcal{N}_d(\theta_{t_m}^m|\widehat{\mu}_m,\widehat{\Sigma}_m)},
\end{align*}
where $\bar{\theta}_{t\cdot}$ and $w_{t\cdot}$ are given in Eqs.~\ref{thetaBar}
and \ref{wWeight}. We can write the parameters of a given mixture component
$\mathcal{N}_d(\theta | \mu_{t\cdot}, \Sigma_{t\cdot})$~as
\begin{align*}
    \Sigma_{t\cdot} &= \left(\frac{M}{h}I_d +
    \widehat{\Sigma}_M^{-1}\right)^{-1}, \\
    \mu_{t\cdot} &= \Sigma_{t\cdot} \left(\frac{M}{h}I_d \bar{\theta}_{t\cdot}
        + \widehat{\Sigma}_M^{-1} \widehat{\mu}_M\right),
\end{align*}
where $\widehat{\mu}_M$ and $\widehat{\Sigma}_M$ are given by Eq.
\ref{parametricparams} and \ref{parametricparams2}. We can sample from this
semiparametric estimate using the IMG procedure outlined in
Algorithm~\ref{alg:nonIMG}, replacing the component weights $w_{t\cdot}$ with
$W_{t\cdot}$ and the component parameters $\bar{\theta}_{t\cdot}$ and
$\frac{h}{M}I_d$ with $\mu_{t\cdot}$ and $\Sigma_{t\cdot}$.

We also have a second semiparametric procedure that may give higher acceptance
rates in the IMG algorithm. We follow the above semiparametric procedure, where
each component is a normal distribution with parameters $\mu_{t\cdot}$ and
$\Sigma_{t\cdot}$, but we use the nonparametric component weights $w_{t\cdot}$
instead of $W_{t\cdot}$. This procedure is also asymptotically exact, since the
semiparametric component parameters $\mu_{t\cdot}$ and $\Sigma_{t\cdot}$
approach the nonparametric component parameters $\bar{\theta}_{t\cdot}$ and
$\frac{h}{M}I_d$ as $h \rightarrow 0$, and thus this procedure tends to the
nonparametric procedure given in Algorithm~\ref{alg:nonIMG}.

\section{Method Complexity}
Given $M$ data subsets, to produce $T$ samples in $d$ dimensions with the
nonparametric or semiparametric asymptotically exact procedures (Algorithm~1)
requires $O(dTM^2)$ operations.  The variation on this algorithm that performs
this procedure $M$$-$$1$ times on pairs of subposteriors (to increase the
acceptance rate; detailed in Section~\ref{sec:nonparametricCombine}) instead
requires only $O(dTM)$ operations.

We have presented our method as a two step procedure, where first parallel MCMC
is run to completion, and then the combination algorithm is applied to the $M$
sets of samples. We can instead perform an online version of our algorithm: as
each machine generates a sample, it immediately sends it to a master machine,
which combines the incoming samples\footnote{For the semiparametric method,
this will involve an online update of mean and variance Gaussian parameters.}
and performs the accept or reject step (Algorithm~1, lines 3-12).  This allows
the parallel MCMC phase and the combination phase to be performed in parallel,
and does not require transfering large volumes of data, as only a single sample
is ever transferred at a time.

The total communication required by our method is transferring $O(dTM)$ scalars
($T$ samples from each of $M$ machines), and as stated above, this can be done
online as MCMC is being carried out. Further, the communication is
unidirectional, and each machine does not pause and wait for any information
from other machines during the parallel sampling procedure.

\section{Theoretical Results}
\label{sec:theory}
Our second and third procedures aim to draw asymptotically exact samples by
sampling from (fully or partially) nonparametric estimates of the
density product. We prove the asymptotic correctness of our estimators, and
bound their rate of convergence. This will ensure that we are generating
asymptotically correct samples from the full posterior as the number of samples
$T$ from each subposterior grows.

\subsection{Density product estimate convergence and risk analysis}
To prove (mean-square) consistency of our estimator, we give a bound on the
mean-squared error (MSE), and show that it tends to zero as we increase the
number of samples drawn from each subposterior. To prove this, we first bound
the bias and variance of the estimator. The following proofs make use of
similar bounds on the bias and variance of the nonparametric and
semiparametric density estimators, and therefore the theory applies to both the
nonparametric and semiparametric density product estimators.

Throughout this analysis, we assume that we have $T$ samples $\{\theta_{t_m}^m
\}_{t_m = 1}^T \subset$ $\mathcal{X}$ $\subset$ $\mathbb{R}^d$ from each
subposterior ($m=1,\ldots,M$), and that $h \in \mathbb{R}_+$ denotes the
bandwidth of the nonparametric density product estimator (which is annealed to
zero as $T \rightarrow \infty$ in Algorithm~\ref{alg:nonIMG}).
Let H\"{o}lder class $\Sigma(\beta,L)$ on $\mathcal{X}$ be defined as the set of
all $\ell = \lfloor \beta \rfloor$ times differentiable functions
$f:\mathcal{X} \rightarrow \mathbb{R}$ whose derivative $f^{(l)}$ satisfies
\begin{equation*}
    \lvert f^{(\ell)}(\theta) - f^{(\ell)}(\theta') \rvert
    \leq L \left\lvert \theta-\theta' \right\rvert^{\beta-\ell} \hspace{3mm}
    \text{ for all } \hspace{1mm} \theta,\theta' \in \mathcal{X}.
\end{equation*}
We also define the class of densities $\mathcal{P}(\beta,L)$ to be 
\begin{equation*}
    %\mathcal{P}(\beta,L) = \left\{  p \hspace{1mm}\Big|\hspace{1mm} p \geq 0,
    %\int p(x) dx=1, \text{ and } p \in \Sigma(\beta,L)  \right\}
    \mathcal{P}(\beta,L) = \left\{  p \in \Sigma(\beta,L) \hspace{1mm}\Big|\hspace{1mm} p \geq 0,
    \int p(\theta) d\theta=1 \right\}.
\end{equation*}
We also assume that all subposterior densities $p_m$ are bounded, i.e. that
there exists some $b>0$ such that $p_m(\theta) \leq b$ for all
$\theta \in \mathbb{R}^d$ and $m \in \{1,\ldots,M\}$.

First, we bound the bias of our estimator. This shows that the bias tends to
zero as the bandwidth shrinks.
\begin{lemma}
    The bias of the estimator $\widehat{p_1 \cldots p_M}(\theta)$ satisfies 
    \begin{equation*}
        \sup_{\raisebox{-2pt}{$\scriptstyle p_1,\ldots,p_M \in \mathcal{P}(\beta, L)$}}
    \hspace{-5mm} \left| \mathbb{E} \left[ \widehat{p_1 \cldots p_M}(\theta) \right] - p_1 \cldots p_M (\theta)  \right|
        \leq \sum_{m=1}^M c_m h^{m\beta}
    \end{equation*}
    for some $c_1,\ldots,c_M > 0$.
\end{lemma}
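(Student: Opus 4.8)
The plan is to reduce the product bias to a single-estimator bias bound and then expand. Since the $M$ subposterior sample sets are generated on separate machines from disjoint data, they are mutually independent, so the estimator factorises in expectation:
\[
    \mathbb{E}\left[\widehat{p_1 \cldots p_M}(\theta)\right] = \prod_{m=1}^M \mathbb{E}\left[\widehat{p}_m(\theta)\right].
\]
Thus the first step is to bound $\mathbb{E}[\widehat{p}_m(\theta)] - p_m(\theta)$ for a single machine. After burn-in each draw $\theta^m_{t_m}$ is marginally distributed according to $p_m$, so $\mathbb{E}[\widehat{p}_m(\theta)] = (K_h * p_m)(\theta)$, and a standard Taylor expansion of $p_m$ around $\theta$ of order $\ell = \lfloor\beta\rfloor$, combined with the moment and tail properties of the Gaussian kernel, yields $\sup_{p_m \in \mathcal{P}(\beta,L)} \lvert\mathbb{E}[\widehat{p}_m(\theta)] - p_m(\theta)\rvert \leq c\,h^\beta$ with $c = c(\beta,L,K)$. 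The same estimate holds for the semiparametric estimator, whose nonparametric factor is a kernel estimator of the correction function $r = p_m/\widehat f_m$, a member of the same H\"older class.

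For the second step, write $\mathbb{E}[\widehat{p}_m(\theta)] = p_m(\theta) + b_m(\theta)$ with $\lvert b_m(\theta)\rvert \leq c\,h^\beta$ uniformly, and recall $0 \leq p_m(\theta) \leq b$. Expanding the product,
\[
    \prod_{m=1}^M\bigl(p_m(\theta)+b_m(\theta)\bigr) - \prod_{m=1}^M p_m(\theta)
    = \sum_{\emptyset \neq S \subseteq \{1,\ldots,M\}} \prod_{m\in S} b_m(\theta) \prod_{m\notin S} p_m(\theta),
\]
and grouping the $\binom{M}{k}$ terms with $\lvert S\rvert=k$ gives, in absolute value, at most $\sum_{k=1}^M \binom{M}{k} b^{M-k} c^k h^{k\beta}$. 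Setting $c_m = \binom{M}{m} b^{M-m} c^m > 0$ produces exactly the claimed bound $\sum_{m=1}^M c_m h^{m\beta}$; since none of these constants depend on the particular choice of $p_1,\ldots,p_M$, the bound passes through the supremum.

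The main obstacle --- really the only nontrivial ingredient --- is the single-estimator bias bound $\lvert (K_h * p_m)(\theta) - p_m(\theta)\rvert = \BigO{h^\beta}$ uniform over $\mathcal{P}(\beta,L)$: one must check that the Gaussian kernel has the vanishing-moment behaviour needed so that the $\ell$-th order Taylor remainder integrates against $K_h$ to something of size $h^\beta$ (for $\beta \le 2$ this is immediate; beyond that one would restrict $\beta$ or use a higher-order kernel). A secondary point deserving care is the semiparametric case: one should fix the convention for the random parametric factor $\widehat f_m$ (e.g.\ condition on $\widehat\mu_m,\widehat\Sigma_m$, or invoke their consistency) so that $\widehat r$ may legitimately be treated as an ordinary kernel estimator of a H\"older-smooth function. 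Once the per-machine bound is in hand, the multilinear expansion above is elementary.
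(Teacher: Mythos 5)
Your proof follows the same route as the paper's: factorize the expectation of the product using independence of the machines' sample sets, invoke the standard $O(h^\beta)$ bias bound for each individual kernel density estimator, and expand the product of the $\bigl(p_m(\theta)+b_m(\theta)\bigr)$ factors, grouping terms by the power of $h^\beta$. You supply details the paper leaves implicit --- the explicit multilinear expansion with constants $c_m = \binom{M}{m}b^{M-m}c^m$, the role of the uniform bound $p_m \le b$, and the caveat that a Gaussian kernel only delivers the $h^\beta$ rate for $\beta \le 2$ unless one uses a higher-order kernel --- but the argument is essentially identical.
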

\begin{proof}
    For all $p_1,\ldots,p_M \in \mathcal{P}(\beta, L)$, 
    \begin{align*}
        | \mathbb{E} \left[ \widehat{p_1 \cldots p_M} \right] - p_1 \cldots p_M | 
        &= \left| \mathbb{E} \left[ \widehat{p}_1 \cldots \widehat{p}_M \right] - p_1 \cldots p_M \right|\\
        &= \left| \mathbb{E}\left[ \widehat{p_1} \right] \cldots \mathbb{E}\left[ \widehat{p}_M \right] 
            - p_1 \cldots p_M  \right| \\
        &\leq \left| (p_1 + \tilde{c}_1 h^\beta)\cldots(p_M + \tilde{c}_M h^\beta) - p_1 \cldots p_M \right| \\
        &\leq \left| c_1 h^\beta + \ldots + c_M h^{M\beta} \right| \\
        %&\leq \sum_{m=1}^M \left|c_m h^{m\beta}\right|\\
        &\leq \left| c_1 h^\beta \right| + \ldots + \left| c_M h^{M\beta} \right| \\
        &= \sum_{m=1}^M c_m h^{m\beta}
    \end{align*}
    where we have used the fact that $ \left| \mathbb{E}\left[ \widehat{p_m}
    \right] - p \right| \leq  \tilde{c}_m h^\beta$ for some~$\tilde{c}_m > 0$.
\end{proof}
Next, we bound the variance of our estimator. This shows that the variance
tends to zero as the number of samples grows large and the bandwidth shrinks.
\begin{lemma}
    The variance of the estimator $\widehat{p_1 \cldots p_M}(\theta)$ satisfies 
    \begin{equation*}
        \sup_{p_1,\ldots,p_M \in \mathcal{P}(\beta, L)}
        \mathbb{V} \left[ \widehat{p_1 \cldots p_M} (\theta) \right]
        %\leq \frac{c}{n h^d}
        \leq \sum_{m=1}^M \binom{M}{m} \frac{c_m}{T^m h^{dm}}
    \end{equation*}
    for some $c_1,\ldots,c_M > 0$ and $0 < h \leq 1$.
\end{lemma}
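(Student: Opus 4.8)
The plan is to exploit the independence of the subposterior samples across the $M$ machines, reduce the variance of the product estimator to the individual means and variances of the per-machine kernel density estimators, and then apply the standard (class-uniform) variance bound for a kernel density estimator of a bounded density.

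First I would observe that $\widehat{p}_1(\theta),\ldots,\widehat{p}_M(\theta)$ are \emph{independent} random variables, since each $\widehat{p}_m$ is a function only of the samples $\{\theta_{t_m}^m\}_{t_m=1}^T$ drawn from $p_m$ on its own machine. Writing $\mu_m = \mathbb{E}[\widehat{p}_m(\theta)]$ and $v_m = \mathbb{V}[\widehat{p}_m(\theta)]$, independence gives $\mathbb{E}[\prod_m \widehat{p}_m] = \prod_m \mu_m$ and $\mathbb{E}[\prod_m \widehat{p}_m^2] = \prod_m(v_m+\mu_m^2)$, hence
\begin{align*}
    \mathbb{V}\!\left[ \widehat{p_1 \cldots p_M}(\theta) \right]
    = \prod_{m=1}^M (v_m + \mu_m^2) - \prod_{m=1}^M \mu_m^2
    = \sum_{\emptyset \neq S \subseteq \{1,\ldots,M\}} \ \prod_{m \in S} v_m \ \prod_{m \notin S} \mu_m^2 .
\end{align*}

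Next I would supply two elementary bounds, uniform over $p_1,\ldots,p_M \in \mathcal{P}(\beta,L)$, for a single kernel density estimator. Since every $p_m$ is bounded by $b$, a direct computation (for the Gaussian kernel, using that the square of a Gaussian density is a rescaled Gaussian density; in general, Young's convolution inequality together with $\int K = 1$) yields $\mathbb{E}\big[ h^{-2d} K(\|\theta-\theta_{t_m}^m\|/h)^2 \big] \le a\, h^{-d}$ for a constant $a = a(b,d,K)$, and therefore $v_m = \tfrac1T \mathbb{V}[\cdot] \le \tfrac{a}{T h^d}$. Similarly $\mu_m$ equals the convolution of $p_m$ with the scaled kernel, so $\mu_m \le b$ (equivalently, by the argument in the proof of the previous lemma, $\mu_m \le p_m(\theta) + \tilde c_m h^\beta \le b + \tilde c_m$ for $0 < h \le 1$); in either form $\mu_m \le b'$ for a constant $b'$. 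The key point is that $a$ and $b'$ depend only on $b$, $d$, and the kernel — not on $m$, $T$, $h$, or the particular densities in the class — so the supremum over $\mathcal{P}(\beta,L)$ is harmless. (This is also where the hypothesis $0 < h \le 1$ is used: to keep the mean contributions $\mu_m$ uniformly bounded.)

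Finally I would substitute these into the subset expansion. Each subset $S$ with $|S| = k$ contributes at most $(a/(Th^d))^k (b')^{2(M-k)}$, and there are $\binom{M}{k}$ of them, so
\begin{align*}
    \sup_{p_1,\ldots,p_M \in \mathcal{P}(\beta,L)} \mathbb{V}\!\left[ \widehat{p_1 \cldots p_M}(\theta) \right]
    \le \sum_{k=1}^M \binom{M}{k} \frac{a^k (b')^{2(M-k)}}{T^k h^{dk}}
    = \sum_{m=1}^M \binom{M}{m} \frac{c_m}{T^m h^{dm}},
\end{align*}
with $c_m = a^m (b')^{2(M-m)} > 0$, which is the claimed bound. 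The only genuinely delicate ingredient is the single-estimator second-moment bound $\mathbb{E}[ h^{-2d} K(\cdot/h)^2 ] \le a\,h^{-d}$ with a constant independent of the underlying density and of $h \le 1$; everything else is the independence decomposition and the bookkeeping of the binomial expansion, so I expect no real obstacle beyond getting that constant right.
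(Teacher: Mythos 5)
Your proposal is correct and follows essentially the same route as the paper: independence of the per-machine estimators reduces the variance of the product to $\prod_m(\mathbb{V}[\widehat{p}_m]+\mathbb{E}[\widehat{p}_m]^2)-\prod_m\mathbb{E}[\widehat{p}_m]^2$, which is then expanded binomially and bounded using $\mathbb{V}[\widehat{p}_m]\leq c/(Th^d)$ and $\mathbb{E}[\widehat{p}_m]^2\leq\tilde{c}$. Your write-up is in fact slightly more explicit than the paper's, since you justify the single-estimator second-moment and mean bounds that the paper simply invokes as known facts.
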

\begin{proof}
    For all $p_1,\ldots,p_M \in \mathcal{P}(\beta, L)$, 
    \begin{align*}
        %\mathbb{V} \left[ \widehat{p_1 \cldots p_M} \right]
        \mathbb{V} [\widehat{p_1 \cldots p_M}]
        &= \mathbb{E}\left[ \widehat{p}_1^2 \right] \cldots \mathbb{E}\left[ \widehat{p}_M^2 \right] - 
            \mathbb{E}\left[ \widehat{p}_1 \right]^2 \cldots \mathbb{E}\left[ \widehat{p}_M\right]^2\\
        &= \left( \prod_{m=1}^M  \mathbb{V}\left[ \widehat{p}_m \right] + \mathbb{E}\left[ \widehat{p}_m\right]^2 \right)
            - \left( \prod_{m=1}^M  \mathbb{E}\left[ \widehat{p}_m\right]^2  \right) \\
        &\leq \sum_{m=0}^{M-1} \binom{M}{m} \frac{\tilde{c}^m c^{M-m}}{T^{M-m} h^{d(M-m)}}\\
        &\leq \sum_{m=1}^M \binom{M}{m} \frac{c_m}{T^{m} h^{dm}}
    \end{align*}
    where we have used the facts that $\mathbb{V}\left[ \widehat{p}_m \right]
    \leq \frac{c}{Th^d}$ for some $c>0$ and $\mathbb{E}\left[
    \widehat{p}_m\right]^2 \leq \tilde{c}$ for some $\tilde{c}>0$.
\end{proof}
Finally, we use the bias and variance bounds to bound the MSE, which shows that our estimator is consistent.
\begin{theorem}
    If $h \asymp T^{-1/(2\beta+d)}$, the mean-squared error of the estimator
    $\widehat{p_1 \cldots p_M}(\theta)$ satisfies 
    \begin{align*}
        \sup_{p_1,\ldots,p_M \in \mathcal{P}(\beta, L)}
        \mathbb{E}\left[ \int \left( \widehat{p_1 \cldots p_M}(\theta) - p_1
        \cldots p_M (\theta) \right)^2 d\theta \right]
        \leq \frac{c}{T^{2 \beta / (2 \beta + d)}}
    \end{align*}
    for some $c > 0$ and $0 < h \leq 1$.
\end{theorem}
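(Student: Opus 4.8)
The plan is to combine the two preceding lemmas through the usual bias--variance decomposition of the integrated mean-squared error. First I would fix $\theta \in \mathcal{X}$ and write
\[
\mathbb{E}\!\left[\left(\widehat{p_1 \cldots p_M}(\theta) - p_1 \cldots p_M(\theta)\right)^2\right]
= \left(\mathbb{E}\!\left[\widehat{p_1 \cldots p_M}(\theta)\right] - p_1 \cldots p_M(\theta)\right)^2
+ \mathbb{V}\!\left[\widehat{p_1 \cldots p_M}(\theta)\right],
\]
and then integrate this identity over $\theta \in \mathcal{X}$, interchanging $\mathbb{E}$ and $\int$ by Tonelli's theorem (the integrand is nonnegative).

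For the squared-bias contribution, the bias lemma gives $\big|\mathbb{E}[\widehat{p_1\cldots p_M}(\theta)] - p_1\cldots p_M(\theta)\big| \leq \sum_{m=1}^M c_m h^{m\beta}$ with constants independent of $\theta$ and uniform over $p_1,\ldots,p_M \in \mathcal{P}(\beta,L)$; since $0<h\leq 1$, every term with $m\geq 2$ is dominated by the $m=1$ term, so the bias is at most $C h^{\beta}$ and hence $\int_{\mathcal{X}} (\text{bias})^2 \, d\theta \leq C^2 \,\mathrm{vol}(\mathcal{X})\, h^{2\beta}$. For the variance contribution, the variance lemma gives $\mathbb{V}[\widehat{p_1\cldots p_M}(\theta)] \leq \sum_{m=1}^M \binom{M}{m}\frac{c_m}{T^m h^{dm}} = \sum_{m=1}^M \binom{M}{m} c_m (T h^d)^{-m}$, again uniformly, so $\int_{\mathcal{X}} \mathbb{V}[\widehat{p_1\cldots p_M}(\theta)] \, d\theta \leq \mathrm{vol}(\mathcal{X}) \sum_{m=1}^M \binom{M}{m} c_m (T h^d)^{-m}$.

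Now I would insert the prescribed bandwidth $h \asymp T^{-1/(2\beta+d)}$. Then $T h^d \asymp T^{2\beta/(2\beta+d)} \to \infty$, so the sum $\sum_m \binom{M}{m} c_m (Th^d)^{-m}$ is controlled by its leading ($m=1$) term, of order $(Th^d)^{-1} \asymp T^{-2\beta/(2\beta+d)}$; simultaneously $h^{2\beta} \asymp T^{-2\beta/(2\beta+d)}$, which is precisely the balance that the choice of $h$ optimizes. Adding the two contributions yields $\mathbb{E}[\int (\widehat{p_1\cldots p_M} - p_1\cldots p_M)^2 \, d\theta] \leq c\, T^{-2\beta/(2\beta+d)}$, and the supremum over $p_1,\ldots,p_M \in \mathcal{P}(\beta,L)$ may be taken on the left-hand side because all the constants produced above are uniform over this class, inherited from the two lemmas.

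The only delicate point is the passage from the pointwise lemma bounds to the integrated bound: it requires that the lemma constants really be independent of $\theta$ (true as stated) and that $\mathcal{X}$ have finite Lebesgue measure, so that $\int_{\mathcal{X}} 1 \, d\theta = \mathrm{vol}(\mathcal{X}) < \infty$. If one instead insists on $\mathcal{X} = \mathbb{R}^d$, the lemmas would first have to be sharpened to $\theta$-dependent bounds with integrable tails (using the assumed boundedness $p_m \leq b$ together with the Gaussian decay of the kernel) before integrating. Everything else is the standard optimization of the kernel-density-estimator MSE rate.
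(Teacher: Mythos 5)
Your proposal is correct and follows essentially the same route as the paper: the bias--variance decomposition of the integrated MSE, the two preceding lemmas, and the substitution $h \asymp T^{-1/(2\beta+d)}$ to balance the $h^{2\beta}$ and $(Th^d)^{-1}$ terms. You are in fact slightly more careful than the paper, which passes from the pointwise lemma bounds to the integrated bound without comment on the finiteness of $\int_{\mathcal{X}} d\theta$ --- a point you rightly flag as the one delicate step.
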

\begin{proof}
    For all $p_1,\ldots,p_M \in \mathcal{P}(\beta, L)$, using the fact that the
    mean-squared error is equal to the variance plus the bias squared, we have
    that 
    \begin{align*}
        \mathbb{E}\left[ \int \left( \widehat{p_1 \cldots p_M}(\theta) - p_1
                \cldots p_M (\theta) \right)^2 d\theta \right]
            &\leq  \left(\sum_{m=1}^M c_m h^{m\beta} \right)^2 + \sum_{m=1}^M \binom{M}{m} \frac{\tilde{c}_m}{T^m h^{dm}}\\
            &\leq k T^{-2\beta/(2\beta + d)} + \frac{\tilde{k}}{T^{1-d(2\beta+d)}} \hspace{2mm}\text{ (for some $k,\tilde{k}>0$)} \\
            &\leq \frac{c}{T^{2 \beta / (2 \beta + d)}}
    \end{align*}
    for some $c_1,\ldots,c_M > 0$ and $\tilde{c}_1,\ldots,\tilde{c}_M > 0$.
\end{proof}

%Finally, we would like to verify (mean-square) consistency, pointwise
%convergence in probability, and concentration in measure.
%\begin{theorem}
    %Consistency, pointwise convergence in probability, and rate of concentration of measure.
%\end{theorem}

%Other potential things to prove: uniform converence (i.e. concentration in $L_{\infty}$).

%\subsection{Markov Chain Monte Carlo}
%Things to show:\\
%(1) the algorithm (Markov chain) outlined in section --
%yields samples from the approximate/estimated posterior distribution (i.e. that
%the stationary distribution of the Markov chain is the product of the KDEs).\\
%(2) Under a random-walk Metropolis Hastings proposal, the expected acceptance
%rate will be higher for sampling from a mini-batch posteriors than from the
%full posterior.

%\begin{theorem}
    %Given any fixed, symmetric proposal in the Metropolis Hastings algorithm,
    %the expected acceptance rate for sampling from a mini-batch posterior
    %$P^{\frac{1}{M}}(\theta) P(x^{n_m} | \theta)$ is greater than the expected
    %acceptance rate for sampling from the full posterior $P(\theta | x^N)$.
%\end{theorem}

%\begin{proof}
    %A higher expected acceptance rate means that more samples will be accepted in a
    %given time period, and/or that one can make larger moves in the parameter space
    %while retaining a similar acceptance rate. Overall, this theorem aims to give
    %evidence to the statement that sampling from the mini-batch posterior (instead
    %of the full posterior) yields faster mixing and convergence of a Markov chain.
%\end{proof}

\section{Method Scope}
The theoretical results and algorithms in this paper hold for posterior
distributions over finite-dimensional real spaces. These include generalized
linear models (e.g. linear, logistic, or Poisson regression), mixture models
with known weights, hierarchical models, and (more generally)
finite-dimensional graphical models with unconstrained variables. This also
includes both unimodal and multimodal posterior densities (such as in
Section~\ref{sec:gmmSec}). However, the methods and theory presented here do
not yet extend to cases such as infinite dimensional models (e.g. nonparametric
Bayesian models \cite{gershman2012tutorial}) nor to distributions over the
simplex (e.g. topics in latent Dirichlet allocation \cite{blei2003latent}). In
the future, we hope to extend this work to these domains.

\section{Related Work}
In~\cite{welling2011bayesian,ahn2012bayesian,Patterson:2013}, the authors
develop a way to sample approximately from a posterior distribution when only a
small randomized mini-batch of data is used at each step.
In~\cite{korattikara2013austerity}, the authors used a hypothesis test to
decide whether to accept or reject proposals using a small set of data
(adaptively) as opposed to the exact Metropolis-Hastings rule. This reduces the
amount of time required to compute the acceptance ratio. Since all of these
algorithms are still sequential, they can be directly used in our algorithm to
generate subposterior samples to further speed up the entire sampling process.

Several parallel MCMC algorithms have been designed for specific models, such
as for topic models~\cite{smola2010architecture,Newman:2009} and nonparametric
mixture models~\cite{williamson2013parallel}. These approaches still require
synchronization to be correct (or approximately correct), while ours aims for
more general model settings and does not need synchronization until the final
combination stage.

Consensus Monte Carlo~\cite{bayesAndBigData} is perhaps the most relevant work
to ours. In this algorithm, data is also portioned into different machines and
MCMC is performed independently on each machine. Thus, it roughly has the same
time complexity as our algorithm. However, the prior is not explicitly
reweighted during sampling as we do in Eq~\ref{eq:subposterior}, and final
samples for the full posterior are generated by averaging subposterior samples.
Furthermore, this algorithm has few theoretical guarantees.  We find that this
algorithm can be viewed as a relaxation of our nonparametric, asymptotically
exact sampling procedure, where samples are generated from an evenly weighted
mixture (instead of each component having weight $w_{t\cdot}$) and where each
sample is set to $\bar{\theta}_{t\cdot}$ instead of being drawn from
$\mathcal{N} \left( \bar{\theta}_{t\cdot}, \frac{h}{M} I_d \right)$. This
algorithm is one of our experimental baselines.

\section{Empirical Study}
In the following sections, we demonstrate empirically that our
method allows for quicker, MCMC-based estimation of a posterior
distribution, and that our consistent-estimator-based procedures
yield asymptotically exact results.  We show our method on a few
Bayesian models using both synthetic and real data. In each
experiment, we compare the following strategies for parallel,
communication-free sampling:\footnote{We did not directly compare
with the algorithms that require synchronization since the setup
of these experiments can be rather different. We plan to explore
these comparisons in the extended version of this paper.}
\begin{itemize*}
    \item \textbf{Single chain full-data posterior samples}
        (\texttt{regularChain})---Typical, single-chain MCMC for sampling from
        the full-data posterior.
    \item \textbf{Parametric subposterior density product estimate}
        (\texttt{parametric})---For $M$ sets of subposterior samples, the
        combination yielding samples from the parametric density product
        estimate.
    \item \textbf{Nonparametric subposterior density product estimate}
        (\texttt{nonparametric})---For $M$ sets of subposterior samples, the
        combination yielding samples from the nonparametric density product
        estimate. 
    \item \textbf{Semiparametric subposterior density product estimate}
        (\texttt{semiparametric})---For $M$ sets of subposterior samples, the
        combination yielding samples from the semiparametric density product
        estimate.
    \item \textbf{Subposterior sample average} (\texttt{subpostAvg})---For $M$
        sets of subposterior samples, the average of $M$ samples consisting of
        one sample taken from  each subposterior.
    \item \textbf{Subposterior sample pooling} (\texttt{subpostPool})---For $M$
        sets of subposterior samples, the union of all sets of samples.
    \item \textbf{Duplicate chains full-data posterior sample pooling}
        (\texttt{duplicateChainsPool})---For $M$ sets of samples from the
        full-data posterior, the union of all sets of samples.
\end{itemize*}

To assess the performance of our sampling and combination strategies, we ran a
single chain of MCMC on the full data for 500,000 iterations, removed the first
half as burn-in, and considered the remaining samples the ``groundtruth''
samples for the true posterior density.  We then needed a general method to
compare the distance between two densities given samples from each, which holds
for general densities (including multimodal densities, where it is ineffective
to compare moments such as the mean and variance\footnote{In these cases,
dissimilar densities might have similar low-order moments. See
Section~\ref{sec:gmmSec} for an example.}).  Following work in density-based
regression \cite{oliva2013distribution}, we use an estimate of the $L_2$
distance, $d_2(p,\hat{p})$, between the groundtruth posterior density $p$ and a
proposed posterior density $\widehat{p}$, where $d_2(p,\hat{p}) = \|p -
\widehat{p} \|_2 = \left(\int (p(\theta) - \widehat{p}(\theta))^2 d\theta
\right)^{1/2}$.
%As we only have access to samples (for both groundtruth and
%estimated posteriors), we use an estimate of the $L_2$ distance
%using kernel density estimation for each, with bandwidth $h =
%T^{-1/(4+d)}$, where $d$ is the dimension of the space.

In the following experiments involving timing, to compute the posterior $L_2$
error at each time point, we collected all samples generated before a given
number of seconds, and added the time taken to transfer the samples and combine
them using one of the proposed methods. In all experiments and methods, we
followed a fixed rule of removing the first $\frac{1}{6}$ samples for burn-in
(which, in the case of combination procedures, was applied to each set of
subposterior samples before the combination was performed).

Experiments were conducted with a standard cluster system. We obtained
subposterior samples by submitting batch jobs to each worker since these jobs
are all independent. We then saved the results to the disk of each worker and
transferred them to the same machine which performed the final combination.

\begin{figure}[t]
        \center{\includegraphics[width=0.95\textwidth]{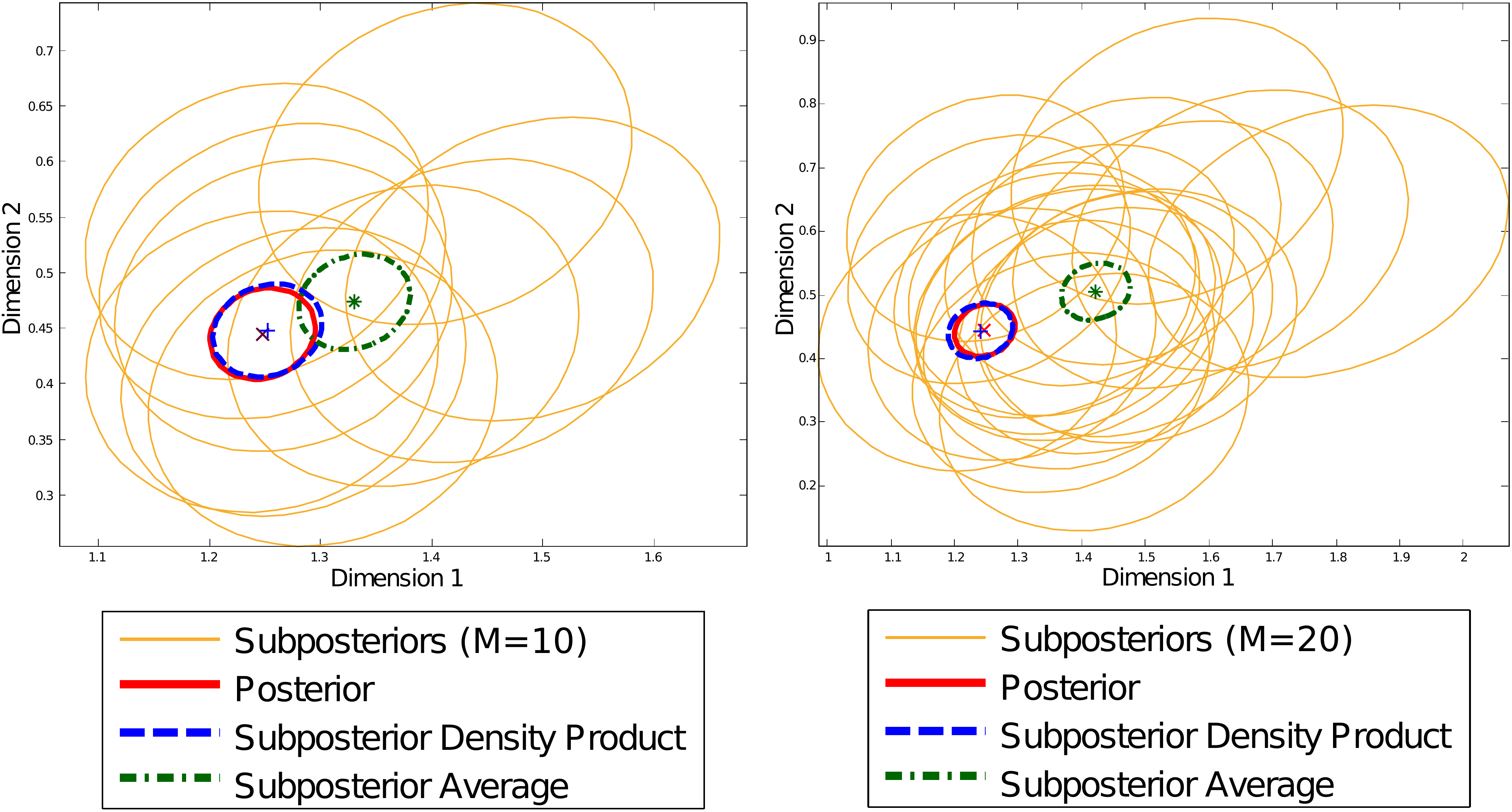}}
        \caption{Bayesian logistic regression posterior ovals. We show the
            posterior $90\%$ probability mass ovals for the first 2-dimensional
            marginal of the posterior, the $M$ subposteriors, the subposterior
            density product (via the \texttt{parametric} procedure), and the
            subposterior average (via the \texttt{subpostAvg} procedure). We
            show $M$=10 subsets (left) and $M$=20 subsets (right). The
            subposterior density product generates samples that are consistent
            with the true posterior, while the \texttt{subpostAvg} produces
            biased results, which grow in error as $M$ increases.
        }
        \label{fig:blr-ovals}
\end{figure}

\subsection{Generalized Linear Models}
Generalized linear models are widely used for many regression and
classification problems. Here we conduct experiments, using logistic regression
as a test case, on both synthetic and real data to demonstrate the speed of our
parallel MCMC algorithm compared with typical MCMC strategies. 

\subsubsection{Synthetic data}
Our synthetic dataset contains 50,000 observations in $50$ dimensions. To
generate the data, we drew each element of the model parameter $\beta$ and data
matrix $X$ from a standard normal distribution, and then drew each outcome as
$y_i \sim \text{Bernoulli}(\text{logit}^{-1}(X_i \beta))$ (where $X_i$ denotes
the $i^{th}$ row of $X$)\footnote{Note that we did not explicitly include the intercept
term in our logistic regression model.}. We use Stan, an automated Hamiltonian
Monte Carlo (HMC) software package,\footnote{http://mc-stan.org} to perform
sampling for both the true posterior (for groundtruth and comparison methods)
and for the subposteriors on each machine. One advantage of Stan is that it is
implemented with C++ and uses the No-U-Turn sampler for HMC, which does not
require any user-provided parameters~\cite{hoffman2011no}.   

In Figure~\ref{fig:blr-ovals}, we illustrate results for logistic regression,
showing the subposterior densities, the subposterior density product, the
subposterior sample average, and the true posterior density, for the number of
subsets $M$ set to 10 (left) and 20 (right). Samples generated by our approach
(where we draw samples from the subposterior density product via the
\texttt{parametric} procedure) overlap with the true posterior much better than
those generated via the \texttt{subpostAvg} (subposterior sample average)
procedure--- averaging of samples appears to create systematic biases. Futher,
the error in averaging appears to increase as $M$ grows. In
Figure~\ref{fig:blr-curves} (left) we show the posterior error vs time. A
regular full-data chain takes much longer to converge to low error compared
with our combination methods, and simple averaging and pooling of subposterior
samples gives biased solutions.

%\begin{figure}[t]
        %\center{\includegraphics[width=0.45\textwidth]{blr_curves1.pdf}}
        %\caption{Bayesian logistic regression posterior $L_2$ error vs time
            %results. The three combination strategies proposed in this paper
            %(\texttt{parametric}, \texttt{nonparametric}, and
            %\texttt{semiparametric}) reduce the posterior $L_2$ error much more
            %quickly than a single full-data Markov chain.  We see that the
        %\texttt{subpostAvg} and \texttt{subpostPool} procedures yield biased
    %results.}
        %\label{fig:blr-curves}
%\end{figure}

%\begin{figure}[h!tbp]
        %\center{\includegraphics[width=0.45\textwidth]{blr_curves2.pdf}}
        %\caption{Bayesian logistic regression posterior $L_2$ error vs time
        %results, comparing our method with multiple full-data Markov chains, for 
        %different numbers of chains $M$.
    %Our method yields much faster convergence to the posterior even though only
%a fraction of the data is being used by each chain.}
        %\label{fig:blr-curves2}
%\end{figure}
We next compare our combination methods with multiple independent ``duplicate''
chains each run on the full dataset.  Even though our methods only require a
fraction of the data storage on each machine, we are still able to achieve a
significant speed-up over the full-data chains. This is primarily because the
duplicate chains cannot parallelize burn-in (i.e. each chain must still take
some $n$ steps before generating reasonable samples, and the time taken to
reach these $n$ steps does not decrease as more machines are added). However,
in our method, each subposterior sampler can take each step more quickly,
effectively allowing us to decrease the time needed for burn-in as we increase
$M$. We show this empirically in Figure~\ref{fig:blr-curves} (right), where we
plot the posterior error vs time, and compare with full duplicate chains as $M$
is increased.

Using a Matlab implementation of our combination algorithms, all (batch)
combination procedures take under twenty seconds to complete on a 2.5GHz Intel
Core i5 with 16GB memory.

\begin{figure}[t]
        \center{\includegraphics[width=0.95\textwidth]{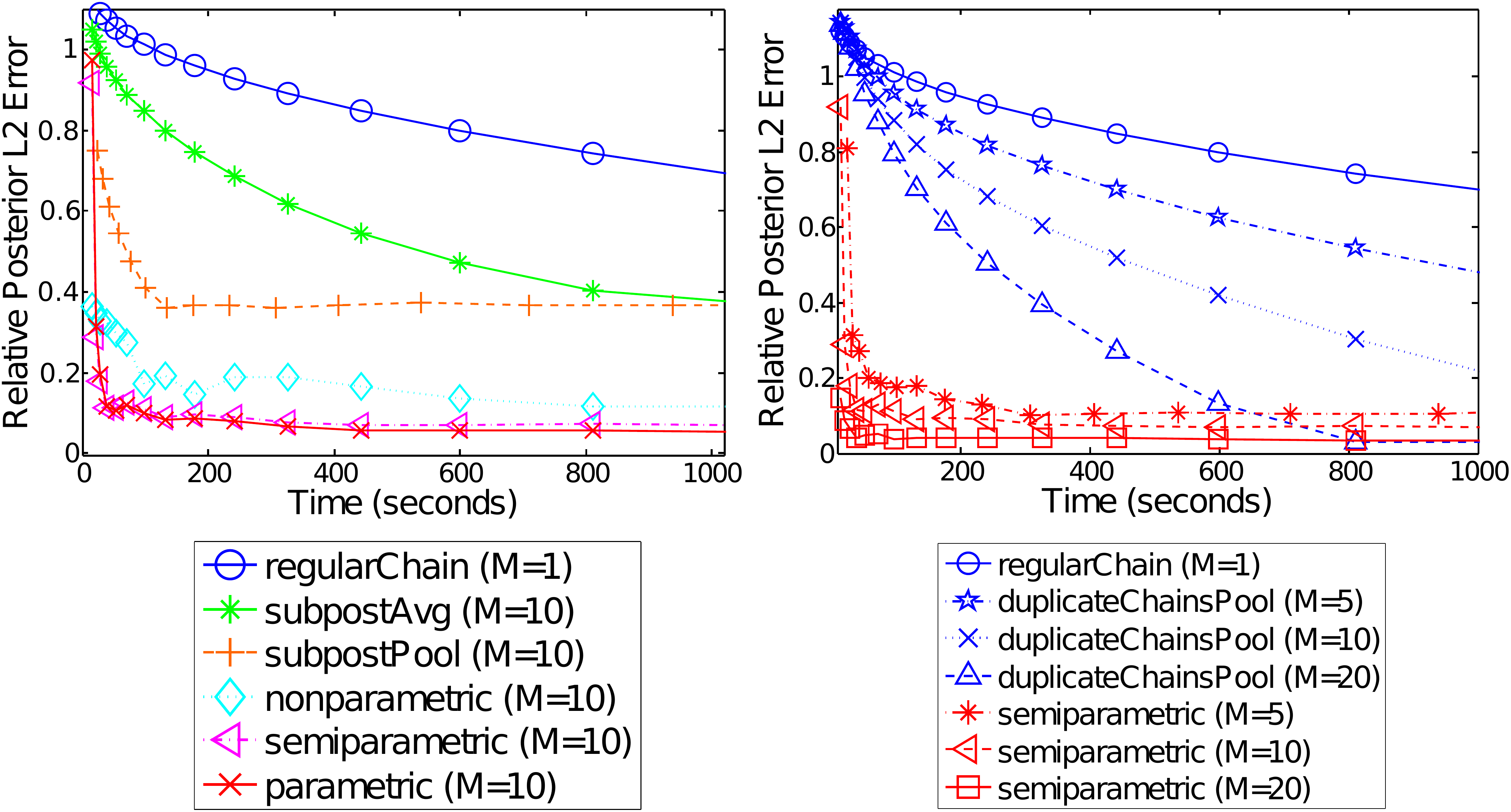}}
        \caption{Posterior $L_2$ error vs time for logistic regression. Left:
            the three combination strategies proposed in this paper
            (\texttt{parametric}, \texttt{nonparametric}, and
            \texttt{semiparametric}) reduce the posterior error much more
            quickly than a single full-data Markov chain; the
            \texttt{subpostAvg} and \texttt{subpostPool} procedures yield
            biased results.  Right: we compare with multiple full-data Markov
            chains (\texttt{duplicateChainsPool}); our method yields faster
            convergence to the posterior even though only a fraction of the
            data is being used by each chain.
        }
        \label{fig:blr-curves}
\end{figure}

%Our embarrassingly parallel sampling method is designed for big data
%applications where it is advantageous to process subsets of data separately and
%independently (for example, due to memory, storage, or accessibility
%constraints). This is in contrast with a common existing method for
%parallelizing MCMC without communication, which involves duplicating an entire
%dataset onto multiple machines and running multiple full-data chains. Here, we
%would like to demonstrate a large advantage of our method: that even though
%each chain only uses a fraction of the data, we are still able to achieve a
%significant speed-up over the full-data chains. This is primarily because
%duplicate chains cannot parallelize burn-in (i.e. each chain must still take
%some $n$ steps before generating reasonable samples, and the time taken to
%reach these $n$ steps does not decrease as more machines are added). However,
%in our method, each subposterior sampler can take each step more quickly,
%effectively allowing us to decrease the time needed for burn-in as we increase
%$M$. We show this empirically in Figure~\ref{fig:blr-curves}, where we plot
%the posterior $L_2$ error vs time, and compare with full duplicate chains as
%$M$ is increased.

%\begin{figure}[t]
        %\center{ \hspace{-4mm} \includegraphics[width=0.25\textwidth]{covtypeCurves2.pdf}}
        %\caption{Bayesian logistic regression classification accuracy vs time
        %for the task of predicting forest cover type.} \label{fig:covtype}
%\end{figure}

\subsubsection{Real-world data}
Here, we use the \emph{covtype} (predicting forest cover
types)\footnote{http://www.csie.ntu.edu.tw/\~{}cjlin/libsvmtools/datasets}
dataset, containing 581,012 observations in $54$ dimensions. A single chain
of HMC running on this entire dataset takes an average of 15.76 minutes per
sample; hence, it is infeasible to generate groundtruth samples for this
dataset. Instead we show classification accuracy vs time. For a given
set of samples, we perform classification using a sample estimate of the
posterior predictive distribution for a new label $y$ with associated datapoint
$x$, i.e.
\begin{align*}
    P(y|x,y^N,x^N) &= \int P(y|x,\beta,y^N,x^N)P(\beta|x^N,y^N) \\
                   &\approx \frac{1}{S}\sum_{s=1}^S P(y|x,\beta_s)
\end{align*}
where $x^N$ and $y^N$ denote the $N$ observations, and $P(y|x,\beta_s) =
\text{Bernoulli}(\text{logit}^{-1}(x^\top \beta_s))$. Figure~\ref{fig:covtype}
(left) shows the results for this task, where we use $M$=50 splits. The
parallel methods achieve a higher accuracy much faster than the single-chain
MCMC algorithm.

\begin{figure}[t]
        \center{\includegraphics[width=0.95\textwidth]{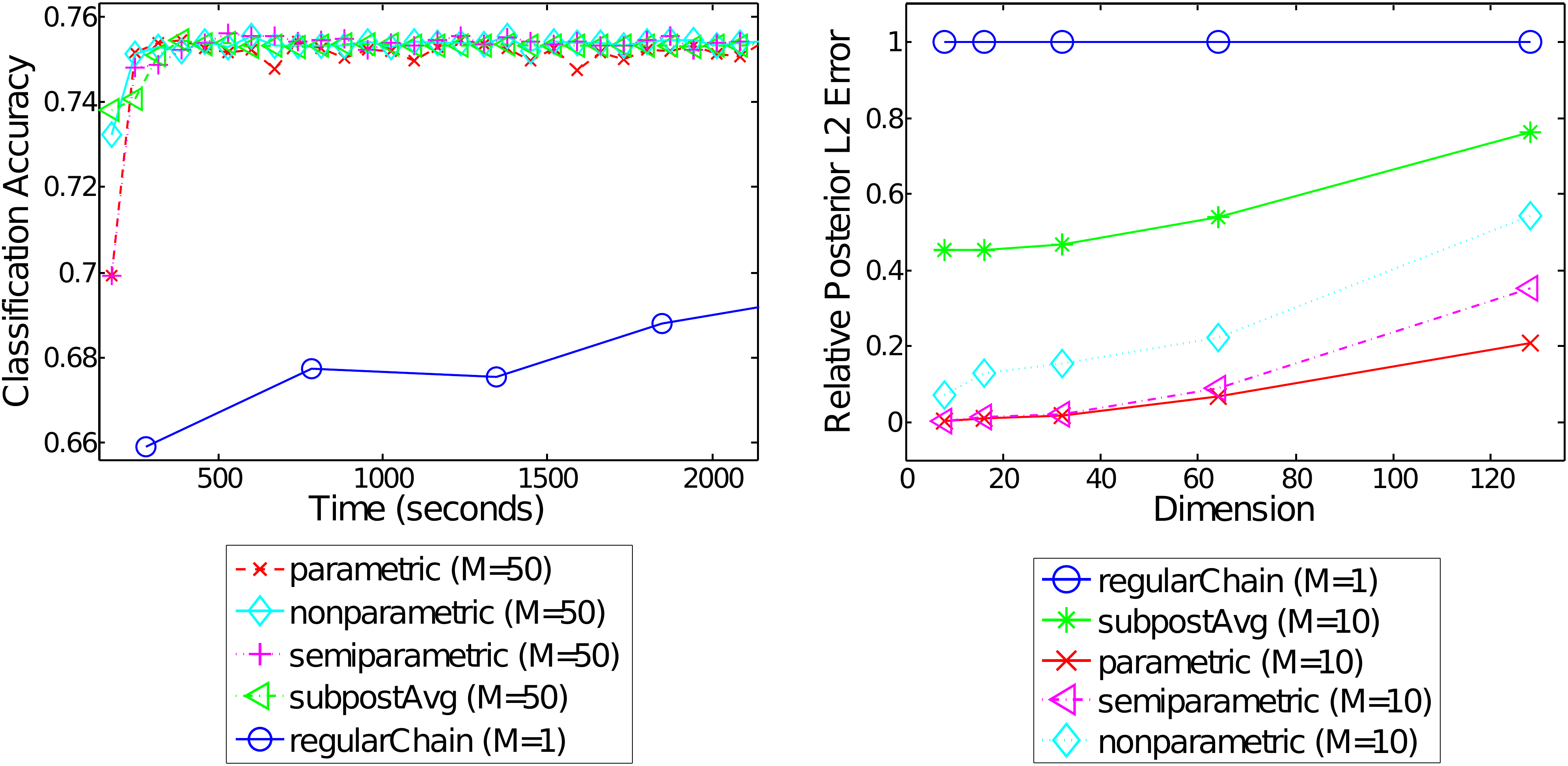}}
        \caption{Left: Bayesian logistic regression classification accuracy vs
            time for the task of predicting forest cover type. Right: Posterior
            error vs dimension on synthetic data at 1000 seconds, normalized so
            that \texttt{regularChain} error is fixed at 1.
    } \label{fig:covtype}
\end{figure}

\subsubsection{Scalability with dimension}
We investigate how the errors of our methods scale with dimensionality, to
compare the different estimators implicit in the combination procedures.  In
Figure~\ref{fig:covtype} (right) we show the relative posterior error (taken at
1000 seconds) vs dimension, for the synthetic data with $M$=10 splits. The
errors at each dimension are normalized so that the \texttt{regularChain} error
is equal to 1.  Here, the \texttt{parametric} (asymptotically biased) procedure
scales best with dimension, and the \texttt{semiparametric} (asymptotically
exact) procedure is a close second.  These results also demonstrate that,
although the \texttt{nonparametric} method can be viewed as implicitly sampling
from a nonparametric density estimate (which is usually restricted to
low-dimensional densities), the performance of our method does not suffer
greatly when we perform parallel MCMC on posterior distributions in much
higher-dimensional spaces. 

\subsection{Gaussian mixture models}
\label{sec:gmmSec}
In this experiment, we aim to show correct posterior sampling in cases where
the full-data posterior, as well as the subposteriors, are multimodal. We will
see that the combination procedures that are asymptotically biased suffer
greatly in these scenarios. To demonstrate this, we perform sampling in a
Gaussian mixture model. We generate 50,000 samples from a ten component mixture
of 2-d Gaussians. The resulting posterior is multimodal; this can be seen by
the fact that the component labels can be arbitrarily permuted and achieve the
same posterior value. For example, we find after sampling that the posterior
distribution over each component mean has ten modes. To sample from this
multimodal posterior, we used the Metropolis-Hastings algorithm, where the
component labels were permuted before each step (note that this permutation
results in a move between two points in the posterior distribution with equal
probability). 

In Figure~\ref{fig:gaussian} we show results for $M$$=$$10$ splits, showing
samples from the true posterior, overlaid samples from all five subposteriors,
results from averaging the subposterior samples, and the results after applying
our three subposterior combination procedures. This figure shows the 2-d
marginal of the posterior corresponding to the posterior over a single mean
component.  The \texttt{subpostAvg} and \texttt{parametric} procedures both
give biased results, and cannot capture the multimodality of the posterior.  We
show the posterior error vs time in Figure~\ref{fig:gmm_hier_curves} (left),
and see that our asymptotically exact methods yield quick convergence to low
posterior error.

\begin{figure}[t]
        \center{\vspace{2mm} \includegraphics[width=0.85\textwidth]{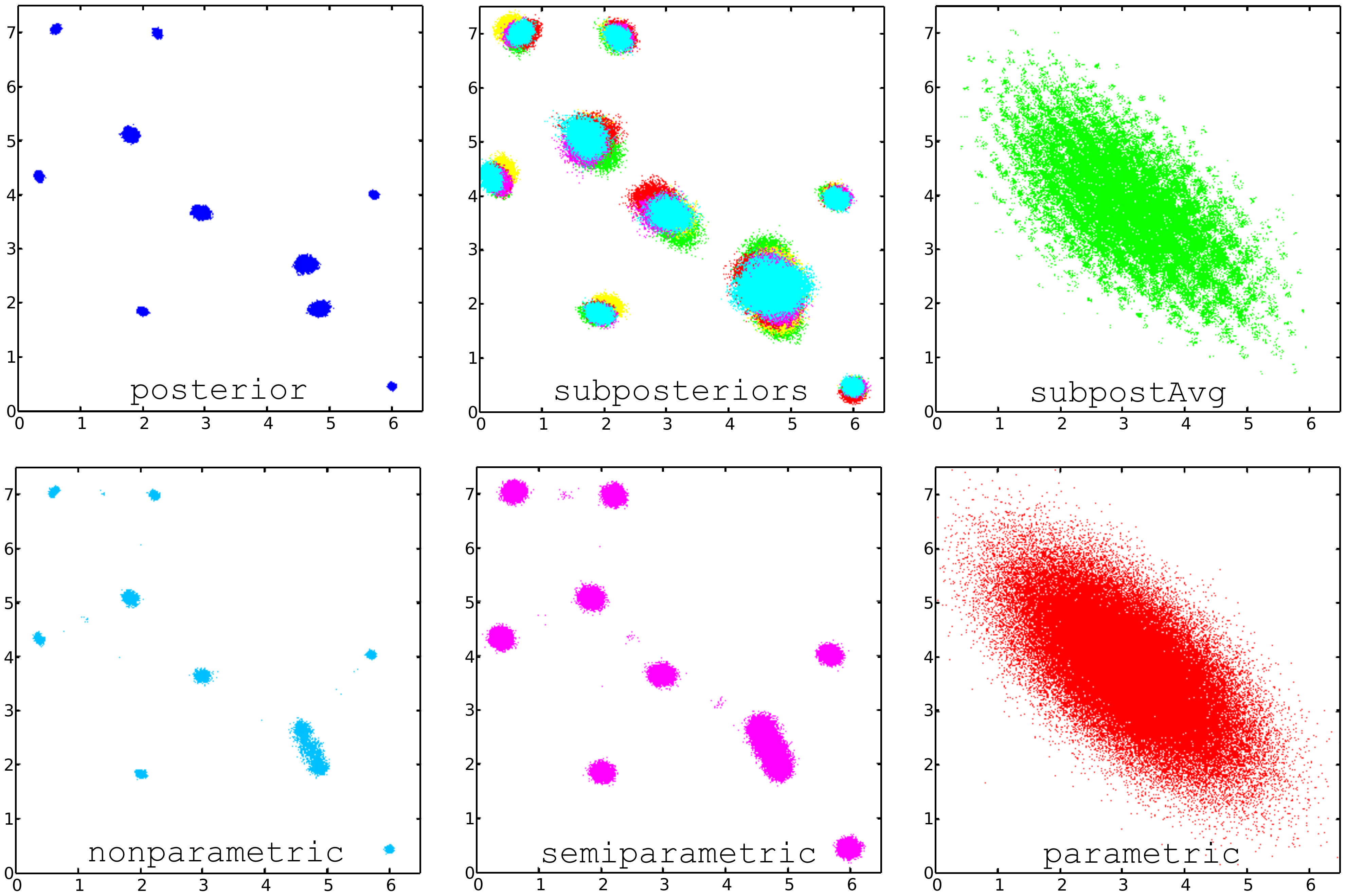}}
        \caption{Gaussian mixture model posterior samples. We show 100,000
            samples from a single 2-d marginal (corresponding to the posterior
            over a single mean parameter) of the full-data posterior (top left), all
            subposteriors (top middle---each one is given a unique color), the
            subposterior average via the \texttt{subpostAvg} procedure (top
            right), and the subposterior density product via the
            \texttt{nonparametric} procedure (bottom left),
            \texttt{semiparametric} procedure (bottom middle), and
            \texttt{parametric} procedure (bottom right).
  }\label{fig:gaussian}
\end{figure}

\subsection{Hierarchical models}
We show results on a hierarchical Poisson-gamma model of the following form
\begin{align*} a &\sim \text{Exponential}(\lambda)\\ b &\sim
\text{Gamma}(\alpha,\beta)\\ q_i &\sim \text{Gamma}(a,b) \text{ for
$i=1,\ldots,N$}\\ x_i &\sim \text{Poisson}(q_i t_i) \text{ for $i=1,\ldots,N$}
\end{align*}
%\begin{align*}
    %a &\sim \text{Exponential}(\lambda) \hspace{2mm} b \sim
        %\text{Gamma}(\alpha,\beta)\\
    %q_i &\sim \text{Gamma}(a,b) \hspace{4mm} x_i \sim
        %\text{Poisson}(q_i t_i) \hspace{4mm}\text{$i=1,\ldots,N$}
%\end{align*}
for $N$=50,000 observations. We draw $\{ x_i \}_{i=1}^N$ from the above
generative process (after fixing values for $a$, $b$, $\lambda$, and $\{ t_i
\}_{i=1}^N$), and use $M$=10 splits. We again perform MCMC using the Stan
software package. 

In Figure~\ref{fig:gmm_hier_curves} (right) we show the posterior error vs
time, and see that our combination methods complete burn-in and converge to a
low posterior error very quickly relative to the \texttt{subpostAvg} and
\texttt{subpostPool} procedures and full-data chains.

\begin{figure}[t]
        \center{\includegraphics[width=0.95\textwidth]{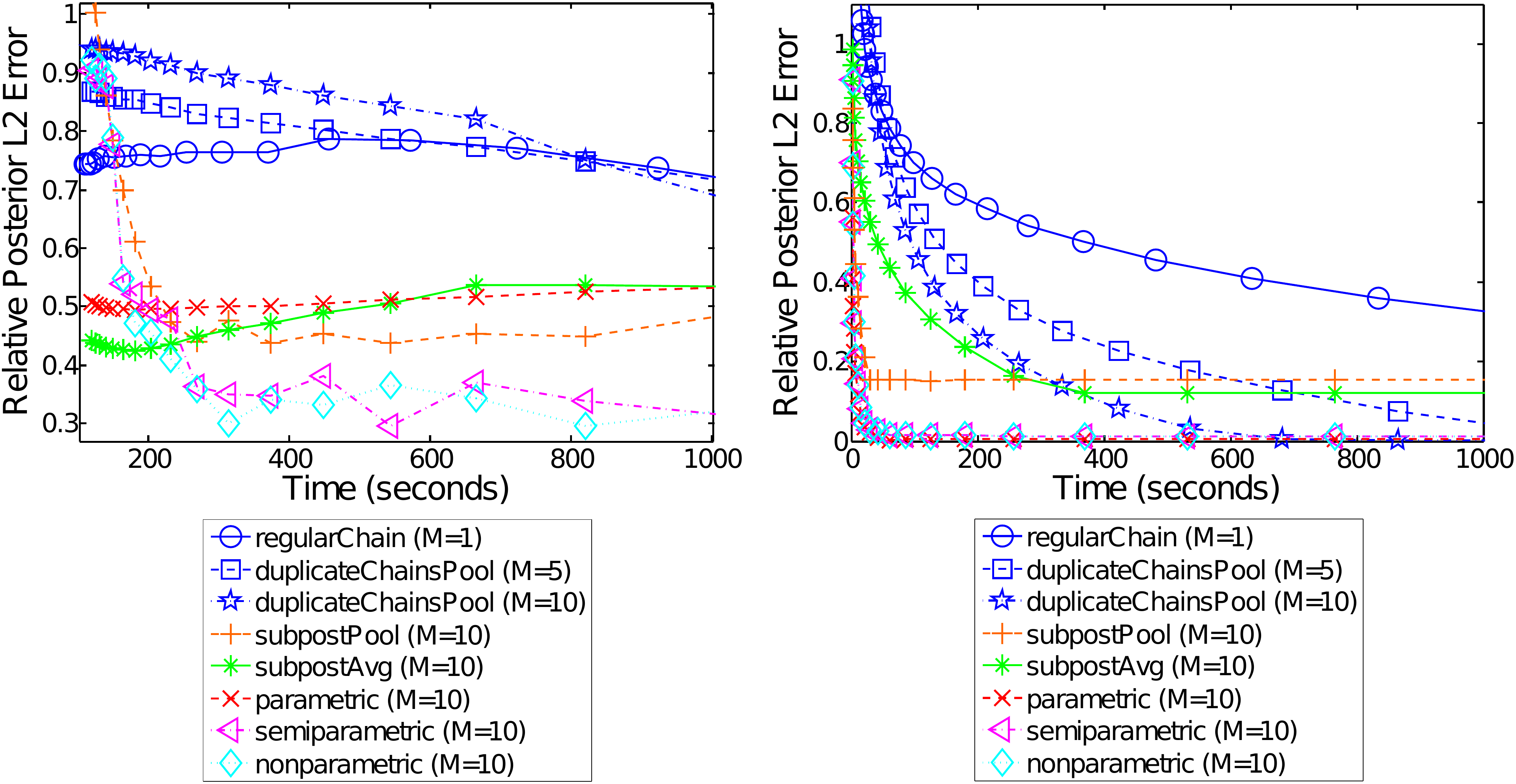}}
        \caption{Left: Gaussian mixture model posterior error vs time results.
            Right: Poisson-gamma hierarchical model posterior error vs time
            results.
    } \label{fig:gmm_hier_curves}
\end{figure}

%\begin{figure}[t]
        %\center{\includegraphics[width=0.25\textwidth]{gmm_curves.pdf}}
        %\caption{Gaussian mixture model $L_2$ error vs time
        %results.}\label{fig:gaussian-l2}
%\end{figure}

%\begin{figure*}[h!tbp]
        %\center{\includegraphics[width=0.98\textwidth]{multiMode.pdf}}
        %\caption{Results for parallel sampling in a two-component Gaussian mixture, using $M=20$
            %splits. We show the true posterior samples (far left), samples
            %from one of the 20 subposteriors (middle left), the average of
            %subposterior samples (middle right), and samples from our procedure
            %involving the semiparametric estimator (far right).}
%\end{figure*}

%\begin{figure}[h!tbp]
        %\center{\includegraphics[width=0.6\textwidth]{blr_errorCurves_new.pdf}}
        %\caption{Error in the posterior mean versus time, for different numbers of partitions $M$.}
%\end{figure}

%\subsection{Bayesian Mixture of Gaussians}
%We demonstrate sampling from the posterior distribution of a Bayesian mixture
%model.

%\subsection{Bayesian Deep Networks}
%We demonstrate sampling from the posterior distribution of a Bayesian deep
%network model.\\
%\\

%Brainstorm of ideas for experiments: Bayesian generalized linear models (like
%Bayesian logistic regression and linear regression). Mixture models or matrix
%factorization with the ``tethered Markov chains". Would Gaussian Processes
%work? What about discrete Bayesian models---what types of these exist? What
%about models with latent variables where identifiability either isn't a problem
%or we do the ``tethered Markov chains".

\section{Discussion and Future Work}

In this paper, we present an embarrassingly parallel MCMC algorithm and provide
theoretical guarantees about the samples it yields. Experimental results
demonstrate our method's potential to speed up burn-in and perform faster
asymptotically correct sampling. Further, it can be used in settings where data
are partitioned onto multiple machines that have little
intercommunication---this is ideal for use in a MapReduce setting.  Currently,
our algorithm works primarily when the posterior samples are real,
unconstrained values and we plan to extend our algorithm to more general
settings in future work.
\newpage

%For example, we would like to be able to sample from simplex
%distributions and the kernel density estimator for compositional
%data~\cite{Aitchison:1985} seems very promising for this purpose.

%In this section I will discuss the scope and extent of how our
%method may be applied. In particular, I'd like to discuss that
%our theory is currently for densities over the reals
%$\mathbb{R}^d$, how we propose to handle issues like
%quasi-ergodicity (where individual chains on separate machines
%may get stuck in modes and not really sample accurately over the
%entire posterior space... note this is a similar problem for
%independent/duplicate chains, and we should cite the techniques
%they use here), mention a number of recent works that produce
%MCMC methods for sampling from multimodal posteriors (adaptive
%darting monte carlo, wormhole monte carlo, others) and how these
%might be particularly useful for us.  Maybe describe other
%miscellaneous issues like whether computation slows in our method
%as we increase the number of subposterior samples from each
%machine (hopefully we'll set things up so it won't), and provide
%other information regarding practicality/implementation).

{\small \bibliographystyle{amsplain}
        \bibliography{embarrassinglyParallelMCMC_neiswangerWangXing}}

\end{document}